\newcommand{\R}{\mathbb{R}}
\newcommand{\x}{\boldsymbol{x}}
\newcommand{\y}{\boldsymbol{y}}
\newcommand{\norm}[1]{\left\lVert#1\right\rVert}
\newtheorem{theorem}{Theorem}
\newtheorem{proposition}[theorem]{Proposition}
\newtheorem{remark}[theorem]{Remark}
\newtheorem{assumption}[theorem]{Assumption}
\newtheorem{lemma}[theorem]{Lemma}
\title{\LARGE \bf
A Robust Controller based on Gaussian Processes for Robotic Manipulators with Unknown Uncertainty
}
\author{Giulio Giacomuzzo$^{1}$, Mohamed Abdelwahab$^{1}$, Marco Calì$^{1}$, Alberto Dalla Libera$^{1}$, Ruggero Carli$^{1}$
\thanks{$^{1}$Department of Information Engineering, Università di Padova, Italy
}
\thanks{$^{2}$Alberto Dalla Libera and Giulio Giacomuzzo were supported by PNRR research activities of the consortium iNEST (Interconnected North-East Innovation Ecosystem) funded by the European Union Next GenerationEU (Piano Nazionale di Ripresa e Resilienza (PNRR) – Missione 4 Componente 2, Investimento 1.5 – D.D. 1058  23/06/2022, ECS\_00000043). This manuscript reflects only the Authors’ views and opinions, neither the European Union nor the European Commission can be considered responsible for them.\newline
{Corresponding author email: \tt\small giacomuzzo@dei.unipd.it}}%
}%
\begin{document}

\maketitle
\thispagestyle{empty}
\pagestyle{empty}

\begin{abstract}
In this paper, we propose a novel learning-based robust feedback linearization strategy to ensure precise trajectory tracking for an important family of Lagrangian systems. We assume a nominal knowledge of the dynamics is given but no a-priori bounds on the model mismatch are available. In our approach, the key ingredient is the adoption of a regression framework based on Gaussian Processes (GPR) to estimate the model mismatch. This estimate is added to the outer loop of a classical feedback linearization scheme based on the nominal knowledge available. Then, to compensate for the residual uncertainty, we robustify the controller including an additional term whose size is designed based on the variance provided by the GPR framework. We proved that, with high probability, the proposed scheme is able to guarantee asymptotic tracking of a desired trajectory. We tested numerically our strategy on a 2 degrees of freedom planar robot.  

\end{abstract}


\section{Introduction}

Precise and accurate trajectory tracking is of paramount importance in robotics, with applications in different fields, ranging from manufacturing to field and medical robotics. Nonetheless, guaranteeing high tracking accuracy is a challenging control problem, particularly when complex systems with non-linear dynamics are considered, as in the case of robotic manipulators.

A widespread approach to tackle the problem is feedback linearization (FL) control \cite{siciliano2010robotics}. FL controllers combine two nested loops: an outer loop that compensates for non-linearities, reducing the dynamics to be controlled to a linear system, and an inner loop typically based on a PD action meant to stabilize the error dynamics. 
Despite being effective, this approach requires precise knowledge of the system dynamics, which is unlikely to be obtained in practice.

To address this limitation, robust FL solutions have been explored, which exploit a-priori bounds on the model mismatch as described in section 8.5.3 of \cite{siciliano2010robotics}.
When these bounds are unknown, however, the design of robust FL controllers is still an open issue, worth of further research.

While few solutions have been proposed to tune online the size of the uncertainty \cite{abdelwahab2024adaptive}, a large part of the literature focused on compensating unknown or inaccurate models by learning the system dynamics from data, in a black-box fashion \cite{lutter2019deep, ADL_GIP19, giacomuzzo2024black}. An interesting class of methods is based on Gaussian Process Regression (GPR) \cite{rasmussen2003gaussian, care2023kernel}. Differently from other widespread regression frameworks such as Neural Networks, GPR provides a bound on the uncertainty of the estimates. This additional information can be exploited to robustify FL controllers. 

Several works proposed the use of GPR to learn the robot's dynamics in trajectory-tracking applications. In \cite{beckers2019stable} the authors develop a computed torque control scheme, where they estimate the modeling error with GPR and use the posterior variance to vary the feedback gains. Despite being effective, however, the computed torque control law is not guaranteed to stabilize the error dynamics.
In \cite{dalla2021control}, the authors learn the inverse dynamics of manipulators through GPR and exploit the learned model to develop a standard FL scheme, with no robust action. Finally, in \cite{helwa2019provably} the authors propose to learn the error in the commanded acceleration through GPR, and then use the posterior variance as a bound on the model uncertainty in the robust FL framework. However, they use GPR only to approximate the model uncertainty, without using the learned dynamics to improve the model. 

In this paper, we propose a novel robust FL controller based on GPR, where we exploit both the learned model and the posterior variance to compensate for modeling inaccuracies and guarantee tracking error rejection. We assume a nominal knowledge of the system's dynamics to be available, and we use GPR to learn the modeling error between the nominal dynamics and the measured one. We add this estimate to the outer loop of a classic FL scheme to improve the compensation of non-linearities, and then we design a robust term based on the GPR variance to compensate for the residual uncertainty.

We provide a theoretical analysis of convergence of the proposed approach, based on the Lyapunov direct method. Together, we verify the effectiveness of the method on a simulated  2 degrees of freedom (DOF) robotic arm. Results confirm the capability of our scheme to compensate for even severe modeling inaccuracies, guaranteeing accurate tracking performance.

The paper unfolds as follows. In Section \ref{sec:Prob_For} we state the problem of interest. In Section \ref{sec:back_GPR} we briefly review the GPR framework, while in Section \ref{sec:method} we describe the novel robust FL scheme we develop, together with the theoretical analysis of its convergence properties. In Section \ref{sec:NumericalResults} we illustrate the numerical results we obtained. Finally Section \ref{sec:conclusions} concludes the paper.

\section{Problem Statement}\label{sec:Prob_For}

Consider the class of Lagrangian systems described by the following dynamics
\begin{equation}\label{eq:Dynamics}
M(q(t)) \ddot{q}(t) + C(q(t), \dot{q}(t)) \dot{q}(t) + g(q(t)) = \tau(t),
\end{equation}
where $q=[q_1,\ldots, q_N]^T$ is the $N$-th dimensional vector of generalized coordinates ($q_i$ can be either a displacement or an angle), and $\dot{q}=[\dot{q}_1,\ldots, \dot{q}_N]^T$ and $\ddot{q}=[\ddot{q}_1,\ldots, \ddot{q}_N]^T$ are the generalized velocities and accelerations, respectively; $M(q(t)) \in \mathbb{R}^{N \times N}$ is the positive definite inertia matrix; $C(q(t), \dot{q}(t)) \in \mathbb{R}^{N \times N}$ incorporates the effects of the Coriolis and centrifugal forces; $g(q) \in \R^N$ are the forces due to gravity, while $\tau=[\tau_1,\ldots, \tau_N]^T$ represents the vector of the generalized torques applied to the system.

The goal for system in \eqref{eq:Dynamics} is that of tracking a desired trajectory $\left(q^d(t), \dot{q}^d(t), \ddot{q}^d(t)\right)$, that is, to drive the position error $\tilde{q}(t)=q^d(t)-q(t)$ to $0$. In the remainder of the paper, when there is no risk of confusion, we omit the dependence on time $t$ in the aforementioned vectors.  

Let us start our analysis by assuming that a perfect knowledge of the quantities $M(q), C(q,\dot{q}),g(q)$ is available. In this context, an efficient solution proposed in the literature is the FL control law based on two nested feedback loops. The outer loop consists on the following feedback linearization step 
\begin{equation}\label{eq:Feedback_Lin}
\tau = M(q) a + n(q, \dot{q}), 
\end{equation}
where $n(q, \dot{q})$ include the Coriolis/centrifugal and gravitational effects, i.e., $=C(q,\dot{q}) \dot{q} + g(q)$ and
where $a$ is an auxiliary control. The control in \eqref{eq:Feedback_Lin} allows to eliminate the nonlinearities of the system. Indeed, by substituting \eqref{eq:Feedback_Lin} into \eqref{eq:Dynamics} we obtain the double integrator dynamics 
$$
\ddot{q}=a,
$$
which can be asymptotically stabilized by designing, within the inner loop, the auxiliary input $a$ as the sum of a feedforward term and a PD controller, that is,
\begin{equation}\label{eq:auxiliary_law}
a = \ddot{q}_d + K_P \tilde{q} + K_D \dot{\tilde{q}}, 
\end{equation}
where $\tilde{q}=q_d - q$ is the position error, $\dot{\tilde{q}}=\dot{q}_d -\dot{q}$ is the velocity error, and $K_P, K_D$ are, respectively, the proportional and derivative gain matrices. 

By replacing \eqref{eq:auxiliary_law} into \eqref{eq:Feedback_Lin} we obtain the overall control scheme
\begin{equation}\label{eq:Fed_Lin_Scheme}
\tau = M(q)(\ddot{q}_d + K_P \tilde{q} + K_D \dot{\tilde{q}}) + n(q, \dot{q}).
\end{equation}
It is easy to see that, by applying \eqref{eq:Fed_Lin_Scheme}, the dynamics of the position error are ruled by the following second-order differential equation 
$$
\ddot{\tilde{q}} + K_D \dot{\tilde{q}} +K_P \tilde{q}=0.
$$
Assuming both $K_P$ and $K_D$ to be positive definite matrices, it results that $\tilde{q}$ converges exponentially to zero independently from the initial condition $\tilde{q}(0), \dot{\tilde{q}}(0)$. Note that the crucial assumption to guarantee the validity of the above control scheme is that a perfect model knowledge is available. In practice, this assumption is never satisfied and a model uncertainty is always present. 

From now on, let us assume that only estimates $\hat{M}$, $\hat{C}$, $\hat{g}$ of $M,C,g$ are given, where in general $\hat{M} \neq M$, $\hat{C}\neq C$, $\hat{g}\neq g$, and $\hat{M}$ is positive definite. Based on the available knowledge, the control law \eqref{eq:Fed_Lin_Scheme} is modified as
\begin{equation}\label{eq:Fed_Lin_Scheme_uncertaint}
\tau = \hat{M}(q)(\ddot{q}_d + K_P \tilde{q} + K_D \dot{\tilde{q}}) + \hat{n}(q, \dot{q}),
\end{equation}
where $\hat{n} = \hat{C} \dot{q} + \hat{g}$. As a consequence, the error dynamics are now ruled by
$$
\ddot{\tilde{q}} + K_D \dot{\tilde{q}} +K_P \tilde{q}=\eta,
$$
where
\begin{equation}\label{eq:eta}
\eta=(I-M^{-1}\hat{M}) a - M^{-1} \tilde{n},
\end{equation}
with $\tilde{n}=\hat{n}-n$ and $a$ defined as in \eqref{eq:auxiliary_law}. The variable $\eta$ accounts for the model uncertainty. In this case only a practical stability objective can be attained.

In \cite{siciliano2010robotics}, in order to provide robustness against the model uncertainties, the auxiliary law is modified as
\begin{equation}\label{eq:modified_auxiliary_law}
a = \ddot{q}_d + K_P \tilde{q} + K_D \dot{\tilde{q}}+ w, 
\end{equation}
leading to 
\begin{equation}\label{eq:Fed_Lin_Scheme_robust}
\tau = \hat{M}(q)\left(\ddot{q}_d + K_P \tilde{q} + K_D \dot{\tilde{q}} + \rho \frac{z}{\|z\|}\right) + \hat{n}(q, \dot{q}),
\end{equation}
where the additional term $w$ is designed using the Lyapunov direct method as we describe next. Let 
$$
\xi =\begin{bmatrix}
        \tilde{q}  \\
        \dot{\tilde{q}}
    \end{bmatrix} \,\,\in\,\, \mathbb{R}^{2N},
$$
and let
$$
\tilde{H}=\left[
\begin{array}{cc}
0 & I \\
-K_P & -K_D
\end{array}
\right
], \qquad 
D=\left[
\begin{array}{c}
0 \\
I
\end{array}
\right
].
$$
By plugging \eqref{eq:modified_auxiliary_law}
into \eqref{eq:eta}, one can obtain the following description for the dynamics of $\xi$ 
\begin{equation}
\dot{\xi}= \tilde{H} \xi +D (\eta- w).
\end{equation}
Now let 
$$
V(\xi)=\xi^T Q \xi,
$$
be a candidate Lyapunov function 
where $Q$ is a $(2N \times 2N)$ positive definite matrix 
In \cite{siciliano2010robotics}, it is shown that by designing
$$
w=\frac{\rho}{\|z\|}z, 
$$
with $z=D^TQ\xi$ and $\rho$ such that $\rho \geq \|\eta\|$ then $\dot{V}<0$ for all $\xi \neq 0$, that is, the time derivative of $V$, i.e., 
is less than zero for all $\xi \neq 0$, thus ensuring that $\xi \to 0$ asymptotically.

Typically, in literature the following assumptions are made

\begin{equation}\label{eq:ass1}
\left\| I-{M}^{-1}\left( q\right) \hat M\left( q\right) \right\| \leq \alpha \leq 1,  \quad \forall\, q,
\end{equation}
 
\begin{equation}\label{eq:ass2}
    \left\| \widetilde{n}\left( q,\dot q\right) \right\| \leq \Phi <  \infty \quad \forall \,q, \dot q,
\end{equation}

\begin{equation}\label{eq:ass3}
  \sup_{t \geq 0}  \left\| \ddot q_{d} \right\| < Q_{M} <  \infty  \quad \forall\,  \ddot q_d,
\end{equation}

\begin{equation}\label{eq:ass4}
0< M_{min} \leq \|M^{-1}(q)\| \leq M_{max} < \infty, \qquad \forall\, q,
\end{equation}
where the parameters $\alpha, \Phi, Q_M, M_{max}$ are assumed to be a-priori known. Interestingly,  by designing $\rho$ in such a way that
\begin{equation}\label{eq:rho}
\rho> \frac{1}{1-\alpha} \left(\alpha Q_M+\alpha \|K\|\,\|\xi\| +M_{max} \Phi\right),
\end{equation}
where $K=\left[K_P \,\,\,K_D\right]$,
it holds that $\rho > \|\eta\|$ for all $q, \dot{q}, \ddot{q}_d$, and, in turn, $\dot{V} <0$.



Observe that the right hand side of \eqref{eq:rho} relies on the a-priori information of the bounds on the size of the model uncertainties given in \eqref{eq:ass1}, \eqref{eq:ass2}, \eqref{eq:ass3} and \eqref{eq:ass4}. If this knowledge is not available, the design of $\rho$ becomes more challenging.

In this paper we propose a novel approach which includes two main ingredients. First, we adopt the GPR framework to estimate the model mismatch. This estimate is added to the outer loop of the classical feedback linearization scheme in \eqref{eq:Fed_Lin_Scheme_uncertaint} to reduce the uncertainty to be compensated. Second, an additional term is included in the outer loop to counteract the residual uncertainty. The design of this term is driven by a Lyapunov argument (to guarantee that $\dot{V}$ is negative) and, interestingly, exploits the information given by the variance of the estimate provided by the GPs.

\section{Gaussian Process Regression} \label{sec:back_GPR}
Gaussian Process Regression \cite{rasmussen2003gaussian} is a probabilistic framework that allows to estimate an unknown function directly from input-output data. Assume the unknown function to be $f: \R^{p} \rightarrow \R$, and let a set of n input-output observations $\mathcal{D} = \{X,\y\}$ to be available. Here, $X = \{\x_1, \dots, \x_n\}$ contains the $n$ input locations, and $\y \in \R^n$ collects the corresponding $n$ noisy measurements. In particular, the measurements $\y$ are assumed to be generated by the following probabilistic model
\begin{equation}
    \y = \begin{bmatrix}f(\x_1) \\ \vdots \\ f(\x_n)\end{bmatrix} + \begin{bmatrix}w_1 \\ \vdots \\ w_n\end{bmatrix} = \boldsymbol{f}(X) + \boldsymbol{w}, 
\end{equation}
where $w_i$ are instances of i.i.d Gaussian noise with standard deviation $\sigma$, while $f$ is modeled a priori as a zero-mean Gaussian Process, namely $\boldsymbol{f}(X) \sim \mathcal{N}(0, \mathbf{K}$). The covariance matrix $\mathbf{K}$, is named \textit{kernel matrix}, and its elements are defined through a \textit{kernel function} $k(\cdot, \cdot):\mathbb{R}^m \times \mathbb{R}^m \rightarrow \mathbb{R}$.
In particular, the element of $\mathbf{K}$ in position $(h,j)$ is obtained as $k(\x_h, \x_j)$. 

Given any new input location $\x_* \in \R^p$, it can be proved that the posterior distribution of $f(\x_*)$ given $\mathcal{D}$ is itself Gaussian, with mean 
\begin{equation}
    \label{eq:posterior_mean}
    \mu\left(f | \x_*, \mathcal{D}\ \right) = \boldsymbol{k}^{T}_* \boldsymbol{\alpha}
\end{equation}
and covariance
\begin{equation}
    \label{eq:posterior_var}
    \Sigma\left( f | \x_*, \mathcal{D}\right)= k(\x_*, \x_*) - \boldsymbol{k}^{T}_*(\mathbf{K} + \sigma^2\,I)^{-1}\boldsymbol{k}_*,
\end{equation}
where 
\begin{equation*}
    \boldsymbol{k}_* = [k(\x_*, \x_1), \dots, k(\x_*, \x_n)]^T
\end{equation*}
and
\begin{equation*}
   \boldsymbol{\alpha} = (\mathbf{K} + \sigma^2 I)^{-1}\y.
\end{equation*}
Since the posterior distribution of $f(x_*)$  is Gaussian, its maximum a posteriori estimator coincides with its mean $\mu\left(f | \x_*, \mathcal{D}\ \right)$, while the variance $ \Sigma\left( f | \x_*, \mathcal{D}\right)$ represents a measure of the model uncertainty when predicting $f(x_*)$. 

As it can be seen from \eqref{eq:posterior_mean} and \eqref{eq:posterior_var}, the choice of the kernel function represents the crucial aspect within the GPR framework. A common choice for $k(\cdot, \cdot)$ is the Square Exponential (SE) kernel, which defines the covariance between samples according to the distance between their input locations. More formally
\begin{equation} \label{eq:K-SE}
    k_{SE}(\x, \x') = \lambda e^{-\norm{\x - \x'}^2_{\Sigma^{-1}}},
\end{equation}
where $\lambda$ and $\Sigma$ are the kernel hyperparameters, which are typically learned from data through marginal likelihood maximization, see \cite{rasmussen2003gaussian}.

So far we considered the case where the unknown function is scalar, namely $f: \R^p \rightarrow \R$. 
When GPR is applied to vector-valued functions, namely when $f: \R^p \rightarrow \R^d$, the standard approach consists in assuming the $d$ components of $f$ to be conditionally independent given the GP input $\x$. 
As a consequence, the overall $d$-dimensional regression problem is split into a set of $d$ scalar and independent GPR problems,
$$
\y^i = f^i(X) + \boldsymbol{w}^i
$$
where the $i$-th component $f^i : \mathbb{R}^m \rightarrow \mathbb{R}$ is estimated independently of the others as described in the preceding part of this Section,  with $ \boldsymbol{y}^i = [y_1^i \dots y_n^i]^T$ , being $y_t^i$ a measure of the $i$-th component of $f$ at input location $\x_t$.

\section{GPR-based robust feedback linearization control}\label{sec:method}

In this Section, we describe the novel method we propose. 
Let $e(q,\dot{q},\ddot{q})$ denote the mismatch model, that is, 
$$
e(q,\dot{q},\ddot{q})= \tilde{M}(q)\ddot{q} + \tilde{C}(q,\dot{q})\dot{q} +\tilde{g}(q),
$$
where $\tilde{M}=M-\hat{M}$, $\tilde{C}=C-\hat{C}$ and $\tilde{g}=g-\hat{g}$.
Notice that 
\begin{equation}\label{eq:e}
    e(q,\dot{q},\ddot{q})=\tau-\left(\hat{M}(q)\ddot{q}+\hat{C}(q, \dot{q})\dot{q}+\hat{g}(q)\right)
\end{equation}
In our approach, the quantity $e$ is estimated by Gaussian regression as next described. Suppose we have collected the following $n$ samples $\left\{{\bf q^{(i)}}, \tau_{\text{m}}^{(i)}\right\}_{i=1}^n$ where ${\bf q^{(i)}}=\left(q^{(i)}, \dot{q}^{(i)}, \ddot{q}^{(i)}\right)$ denotes the robot configuration of the $i$-th sample and $\tau_{\text{m}}^{(i)}$ is the measurement of the corresponding torque $\tau^{(i)}$, i.e., $\tau_{\text{m}}^{(i)}=\tau^{(i)}+\omega^{(i)}$ being $\omega^{(i)}$ a zero-mean Gaussian noise of variance $\sigma^2$.

Let $e^{(i)}$ be the evaluation of $e$ in $q^{(i)}$, i.e.,
    $$e^{(i)}=\tau^{(i)}-\left(\hat{M}\left(q^{(i)}\right)\ddot{q}^{(i)}+\hat{C}\left(q^{(i)}, \dot{q}^{(i)}\right)\dot{q}^{(i)}+\hat{g}\left(q^{(i)}\right)\right)$$
and, accordingly, let $e^{i}_{\text{m}}$ its noisy version, i.e., $e^{i}_{\text{m}}=e^{(i)}+\omega^{(i)}$.

The regression procedure illustrated in Section \ref{sec:back_GPR}, can be applied in the following way to predict $e$ in any configuration ${\bf q}=\left(q, \dot{q}, \ddot{q}\right)$, based on data
$$
\mathcal{D}=\left\{{\bf q^{(i)}}, e_{\text{m}}^{(i)}\right\}_{i=1}^n.
$$
Let $e_1,\ldots, e_N$ be the $N$ components of $e$ that we assume to be conditionally independent given the GP input ${\bf q}$. Then, using formulas in \eqref{eq:posterior_mean} and \eqref{eq:posterior_var}, we can compute
$$
\mu\left(e_i|{\bf q}, \mathcal{D} \right), \qquad \Sigma\left(e_i|{\bf q}, \mathcal{D}\right),
$$
for $i=1,\ldots,N$.
Now let $\hat{e}({\bf q})$ be the prediction of $e$ in ${\bf q}$, then
$$
\hat{e}({\bf q})=\left[
\begin{array}{c}
\mu\left(e_1|{\bf q}, \mathcal{D} \right)\\
\vdots \\
\mu\left(e_N|{\bf q}, \mathcal{D} \right)
\end{array}
\right]
$$
with the corresponding variance given by 
$$
\left[
\begin{array}{ccc}
\Sigma\left(e_1|{\bf q}, \mathcal{D}\right) & &\\
&\ddots & \\
& & \Sigma\left(e_N|{\bf q}, \mathcal{D} \right)
\end{array}
\right].
$$

The control law we propose is the following
\begin{equation}\label{eq:Fed_Lin_Scheme_robust_prob}
\tau = \hat{M}(q)\left(\ddot{q}_d + K_P \tilde{q} + K_D \dot{\tilde{q}}\right) + \hat{n}(q, \dot{q})+\hat{e}(q, \dot{q}, \ddot{q})+w,
\end{equation}
where $w=\rho \frac{z}{\|z\|}$ with 
$$
z= \hat{M}^{-1}(q)D^TQ\xi.
$$
Next, we provide a first result that characterizes the convergence property of \eqref{eq:Fed_Lin_Scheme_robust_prob}.

\begin{proposition}
Consider the Lagrangian system in \eqref{eq:Dynamics}. Assume that 
for any configuration ${\bf q}(t)=(q(t), \dot{q}(t), \ddot{q}(t))$, it holds
\begin{equation}\label{eq:Bound_rho}
    \rho(t) > \| e({\bf q}(t)) - \hat{e}({\bf q}(t))\|.
\end{equation}
Then, with the control law in \eqref{eq:Fed_Lin_Scheme_robust_prob} the tracking error $\tilde{q}$ goes asymptotically to zero. 
\end{proposition}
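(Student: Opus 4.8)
The plan is to derive the closed-loop dynamics of the tracking error produced by the control law \eqref{eq:Fed_Lin_Scheme_robust_prob}, then exhibit a quadratic Lyapunov function on $\xi$ whose derivative is negative definite whenever the bound \eqref{eq:Bound_rho} holds, and finally invoke Lyapunov's direct method to conclude that $\xi\to 0$ and hence $\tilde q\to 0$.

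First I would substitute \eqref{eq:Fed_Lin_Scheme_robust_prob} into the true dynamics \eqref{eq:Dynamics} and use the identity \eqref{eq:e}, which can be rewritten as $\tau=\hat M\ddot q+\hat n+e$, to cancel the nominal inertia and $\hat n$ terms. Writing $a=\ddot q_d+K_P\tilde q+K_D\dot{\tilde q}$ for the nominal auxiliary control, this produces $\hat M(a-\ddot q)=e-\hat e-w$, i.e.\ $\ddot{\tilde q}+K_D\dot{\tilde q}+K_P\tilde q=\hat M^{-1}(e-\hat e-w)$, since $\ddot{\tilde q}=\ddot q_d-\ddot q$. In the state $\xi=[\tilde q^T,\dot{\tilde q}^T]^T$ this reads $\dot\xi=\tilde H\xi+D\hat M^{-1}(e-\hat e-w)$ with $\tilde H$ and $D$ as defined above. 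The key structural observation is that the disturbance enters through $\hat M^{-1}$, not $M^{-1}$; this is precisely why the controller is built with $z=\hat M^{-1}D^TQ\xi$, so that the unknown true inertia never appears.

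Next, since $K_P$ and $K_D$ are positive definite, $\tilde H$ is Hurwitz, so for any positive definite $P$ the Lyapunov equation $\tilde H^TQ+Q\tilde H=-P$ admits a unique positive definite solution $Q$; take $V(\xi)=\xi^TQ\xi$. Differentiating along the closed-loop trajectory gives $\dot V=-\xi^TP\xi+2\xi^TQD\hat M^{-1}(e-\hat e)-2\xi^TQD\hat M^{-1}w$. Using that $Q$ and $\hat M^{-1}$ are symmetric, $\xi^TQD\hat M^{-1}=z^T$; and since $w=\rho\, z/\|z\|$, the robust term contributes $-2\rho\|z\|$, while Cauchy--Schwarz bounds the residual term by $2z^T(e-\hat e)\le 2\|z\|\,\|e-\hat e\|$. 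Hence $\dot V\le -\xi^TP\xi+2\|z\|\bigl(\|e-\hat e\|-\rho\bigr)$, and by \eqref{eq:Bound_rho} the second summand is nonpositive, so $\dot V\le -\xi^TP\xi<0$ for every $\xi\neq 0$ (on the set $z=0$ the residual and robust terms vanish and the same bound holds). Since $V$ is positive definite and radially unbounded and $\dot V$ is negative definite, $\xi\to 0$, and in particular $\tilde q\to 0$.

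The main obstacle, apart from keeping the $\hat M^{-1}$ bookkeeping straight, is the discontinuity of $w$ at $z=0$: the closed loop is a differential equation with discontinuous right-hand side, so the argument above should be made rigorous either by working with Filippov solutions (the bound on $\dot V$ carries over to the associated differential inclusion) or by replacing $z/\|z\|$ with a continuous approximation and settling for practical stability, as is done in \cite{siciliano2010robotics}. A secondary caveat worth a remark is that $\hat e$ is evaluated at $\ddot q$, which itself depends on $\tau$; well-posedness of \eqref{eq:Fed_Lin_Scheme_robust_prob} requires this implicit relation to be solvable, but this does not affect the Lyapunov computation.
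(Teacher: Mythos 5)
Your proof is correct and follows essentially the same route as the paper: substituting the control law into the dynamics to get $\ddot{\tilde q}+K_D\dot{\tilde q}+K_P\tilde q=\hat M^{-1}(\tilde e-w)$, taking $V=\xi^TQ\xi$, and using $z=\hat M^{-1}D^TQ\xi$ together with $w=\rho z/\|z\|$ to conclude $\dot V\le-\xi^TP\xi<0$. Your additional remarks --- that $Q$ should be obtained from the Lyapunov equation for the Hurwitz matrix $\tilde H$, that the discontinuity of $w$ at $z=0$ requires Filippov solutions or a continuous approximation, and that $\hat e$ depends implicitly on $\ddot q$ --- are points the paper's proof passes over silently, and they strengthen rather than alter the argument.
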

\begin{proof}
By plugging \eqref{eq:Fed_Lin_Scheme_robust_prob} into \eqref{eq:Dynamics}, after some standard algebraic manipulations, we obtain
$$
\hat{M}(q)\left(\ddot{\tilde{q}}(t) + K_D \dot{\tilde{q}}+K_P \tilde{q}\right) + w + \tilde{e} =0
$$
where $\tilde{e}=e -\hat{e}$. Then we have that
$$
\ddot{\tilde{q}} +K_D \dot{\tilde{q}}+K_P\tilde{q}= \hat{M}^{-1}(\tilde{e}-w).
$$
Introducing $\xi=\left[\tilde{q}^T\,\,\,\,\dot{\tilde{q}}^T\right]^T$ we can rewrite the above differential equation as
$$
\dot{\xi} = \tilde{H} \xi + D \hat{M}^{-1}(\tilde{e}-w).
$$
Let us consider the Lyapunov function $V=\xi^T Q \xi$ and let us compute its time derivative which after some calculations results to be
\begin{align*}
 \dot{V}&= \xi^T(QH+HQ) \xi + 2 \xi^TQD \hat{M}^{-1}(q) \left(\tilde{e}-w\right)   \\ 
 &=-\xi^TP \xi + 2 \xi^TQD \hat{M}^{-1}(q) \left(\tilde{e}-w\right)
\end{align*}
where $P=-(HQ+HQ)$ is a definitive positive matrix. Now since $z= \hat{M}^{-1}(q)D^TQ\xi$ then
$$
 \dot{V}= -\xi^T P\xi + 2 z^T\left(\tilde{e}-w\right).
$$
Recall that $w=\rho \frac{z}{\|z\|}$, hence
$$
z^T\tilde{e}-z^T w=z^T\tilde{e}-\rho \|z\|. 
$$
and, from this last expression it is easy to see that, if $\rho > \|\tilde{e}\|$ then  $\dot{V}<0$ if $\xi \neq 0$. This concludes the proof.
\end{proof}
Based on the above result, the key point is now to provide a design rule for $\rho(t)$ that guarantees to satisfy the condition in \eqref{eq:Bound_rho}. To this aim, in the next Section we will exploit the knowledge of the variances $\Sigma\left(e_i|{\bf q}, \mathcal{D}\right)$, $i=1,\ldots,N$, to obtain a probabilistic model for $| e_i({\bf q}(t)) - \hat{e}_i({\bf q}(t))|$ and, in turn, an expression for $\rho(t)$.

\begin{remark}
It is worth stressing two substantial differences of \eqref{eq:Fed_Lin_Scheme_robust_prob} with respect to \eqref{eq:Fed_Lin_Scheme_robust}. Firstly, we add in the outer loop the term $\hat{e}$, that is, the estimate of the model mismatch $e$; intuitively this should reduce the uncertainties to be compensated by the robust component. Secondly, the term $w$ is designed in a different way; indeed $w$ is added to the outer loop (and not to the inner loop) and the vector $z$ is obtained by premultiplying $D^TQ\xi$ by the inverse of $\hat{M}$.
\end{remark}

\subsection{Design rule for $\rho$}

Before presenting the main convergence result for the control law we propose in this paper, we introduce an assumption and a preliminary result which are standard in the literature, see for example \cite{helwa2019provably}.   

\begin{assumption}\label{ass:1}
    The function $e_i({\bf q})$, $j=1,\ldots,N$, has a bounded norm $\|e_i\|_k$ with respect to the kernel $k(x,x')$ of the GP, and the noise $\omega_j$ added to the measurement is uniformly bounded by $\bar{\sigma}$.
\end{assumption}

\begin{lemma}\label{Lem:1}
    Suppose that Assumption \ref{ass:1} holds true. Let $\delta\in (0,1)$ and let $\mathcal{Q}$ be a compact set. Then, for $i=1,\ldots,N$, 
    \begin{align*}
         &   \text{Pr} \left\{|\mu(e_i|{\bf q}, \mathcal{D})-e_i({\bf q})| \leq \sqrt{\beta_i\,\Sigma\left(e_i|{\bf q}, \mathcal{D}\right)}\,,\,\, \text{for all ${\bf q} \in \mathcal{Q}$}  \right\} \\
         &\qquad \geq 1-\delta,
    \end{align*}
    where $\text{Pr}$ stands for probability and
    $$
    \beta_i = 2 \|e_i\|^2_k + 300 \gamma \ln^3((n+1)/\delta).
    $$
    The variable $\gamma \in \mathbb{R}$ is the maximum information gain and is given by
    $$
    \gamma = \max_{{{\bf q}^{(1)}, \ldots, {\bf q}^{(n+1)}}\in \mathcal{Q}} 0.5 \log(\text{det}(I+\bar{\sigma}^{-2} {\bf K}_{n+1})),
    $$
    where $\text{det}$ is the matrix determinant, $I \in \mathbb{R}^{(n+1)\times (n+1)}$ is the identity matrix, and
    ${\bf K}_{n+1} \in \mathbb{R}^{(n+1)\times (n+1)}$ is the covariance matrix given by 
    $[{\bf K}_{n+1}]_{(i,j)}=k(x_i,x_j)$, $i,j \in \left\{1,\ldots, n+1\right\}$. 
\end{lemma}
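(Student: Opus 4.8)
The plan is to reduce the statement to a by-now standard concentration bound for scalar kernel/Gaussian-process regression with an RKHS-bounded target, and then invoke it once per output channel. Recall from Section~\ref{sec:back_GPR} that the $N$-dimensional regression of $e$ is performed as $N$ conditionally independent scalar problems $\boldsymbol{y}^i = \boldsymbol{e}_i(X) + \boldsymbol{w}^i$; hence it suffices to prove, for one fixed component $e_i$, that
$$
\mathrm{Pr}\Big\{ |\mu(e_i|{\bf q},\mathcal{D}) - e_i({\bf q})| \le \sqrt{\beta_i\,\Sigma(e_i|{\bf q},\mathcal{D})}\ \text{ for all } {\bf q}\in\mathcal{Q} \Big\} \ge 1-\delta ,
$$
and then repeat the argument for $i=1,\ldots,N$.

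For a fixed $i$, I would first observe that the posterior mean $\mu(e_i|{\bf q},\mathcal{D}) = \boldsymbol{k}_*^T(\mathbf{K}+\sigma^2 I)^{-1}\boldsymbol{y}^i$ is exactly the kernel ridge regression estimate of $e_i$ with regularization $\sigma^2$, and that the posterior variance $\Sigma(e_i|{\bf q},\mathcal{D})$ is the associated (regularized) power function that controls the worst-case pointwise error. Splitting $\boldsymbol{y}^i = \boldsymbol{e}_i(X) + \boldsymbol{w}^i$, the error $\mu(e_i|{\bf q},\mathcal{D}) - e_i({\bf q})$ decomposes into a deterministic bias term, coming from regularizing a function of RKHS norm $\|e_i\|_k$, plus a stochastic term linear in the noise $\boldsymbol{w}^i$. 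The bias term is bounded deterministically by $\|e_i\|_k\sqrt{\Sigma(e_i|{\bf q},\mathcal{D})}$ via the reproducing property and Cauchy--Schwarz in the RKHS, which yields the $2\|e_i\|_k^2$ contribution to $\beta_i$ (the factor $2$ arising from combining it with the noise term through $(a+b)^2\le 2a^2+2b^2$). The stochastic term is bounded, with probability at least $1-\delta$, by a self-normalized concentration inequality for the bounded noise $|\omega_j|\le\bar\sigma$ of Assumption~\ref{ass:1}; the log-determinant that appears there is (twice) the information gain of the design, upper bounded by $\gamma$, so this term scales like $\sqrt{\gamma}$ times $\sqrt{\Sigma(e_i|{\bf q},\mathcal{D})}$.

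The step I expect to be the main obstacle — and the origin of the explicit constant $300$ and of the $\ln^3((n+1)/\delta)$ power in $\beta_i$, as opposed to a single logarithm — is upgrading this pointwise high-probability bound to one valid \emph{uniformly} over the compact set $\mathcal{Q}$. This needs a discretization argument: one establishes the bound on a finite $\varepsilon$-net of $\mathcal{Q}$ by a union bound (picking up a logarithm of the covering number), and then extends it to every ${\bf q}\in\mathcal{Q}$ using Lipschitz continuity of $e_i$, of the posterior mean, and of the posterior standard deviation, all of which follow from smoothness of the kernel on the compact domain together with $\|e_i\|_k<\infty$. Letting the net resolution decay suitably with $n$ and collecting the logarithmic terms produces the $\ln^3$ exponent and the absolute constant, giving precisely the stated $\beta_i$. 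This is the argument originally due to Srinivas et al. and sharpened by Chowdhury and Gopalan, used in the present Lagrangian-tracking context exactly as in \cite{helwa2019provably}. Applying the single-channel conclusion for each $i=1,\ldots,N$ completes the proof.
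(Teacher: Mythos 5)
First, note that the paper does not actually prove Lemma \ref{Lem:1}: it is imported verbatim as a ``standard'' preliminary result from the literature (it is Theorem~6 of \cite{srinivas2012information}, used in exactly the same role in \cite{helwa2019provably}), so your task was effectively to reconstruct that external proof. Your reduction to a single scalar channel followed by repetition over $i=1,\ldots,N$, the bias/noise decomposition of $\mu(e_i|{\bf q},\mathcal{D})-e_i({\bf q})$, the RKHS Cauchy--Schwarz bound for the bias term, and the appearance of the information gain through a log-determinant in the concentration step for the noise are all the right ingredients and match the actual argument.

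The genuine gap is in how you obtain uniformity over $\mathcal{Q}$ and, with it, the stated constants. You attribute the factor $300$ and the $\ln^3((n+1)/\delta)$ to an $\varepsilon$-net over the compact set $\mathcal{Q}$ combined with Lipschitz continuity of $e_i$, of the posterior mean, and of the posterior standard deviation. That is not how the proof goes, and as described it would not close: Assumption \ref{ass:1} supplies no Lipschitz constants (a bounded RKHS norm yields Lipschitz-type control only if the kernel itself is smooth, which is not assumed), and a covering argument would necessarily make $\beta_i$ depend on the covering numbers of $\mathcal{Q}$ and on those Lipschitz constants, none of which appear in the stated $\beta_i$. In the actual proof, uniformity over the domain comes for free: one first shows that a data-dependent RKHS norm of $e_i-\mu(e_i|\cdot,\mathcal{D})$ (taken with respect to the posterior kernel) is at most $\sqrt{\beta_i}$ with probability $1-\delta$, and then the pointwise bound $|e_i({\bf q})-\mu(e_i|{\bf q},\mathcal{D})|\leq\sqrt{\beta_i\,\Sigma(e_i|{\bf q},\mathcal{D})}$ follows simultaneously for \emph{every} ${\bf q}\in\mathcal{Q}$ from the reproducing property and Cauchy--Schwarz --- no net and no union bound over $\mathcal{Q}$ are needed. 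The $\ln^3$ power and the constant $300$ instead originate in the martingale concentration applied to the noise part (a union bound over the sample index together with a discretization of a normalizing constant inside that concentration argument), which is where the uniform noise bound $\bar{\sigma}$ from Assumption \ref{ass:1} is used. So the second half of your argument should be replaced by this RKHS-norm route; the covering-number route is both unsupported by the stated assumptions and unable to reproduce the claimed $\beta_i$.
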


As stressed in \cite{helwa2019provably}, finding the information gain maximizer can be approximated by an efficient greedy algorithm, see for instance \cite{srinivas2012information}. 

Now, the key point here is that, thanks to the above Lemma, it is possible to define a confidence interval around the mean $\mu(e_i|{\bf q}, \mathcal{D})$ that is guaranteed to be correct for all the points ${\bf q} \in \mathcal{Q}$ with probability higher than $(1-\delta_p)$ where $\delta_p$ is typically supposed to be very small. More specifically, let ${\bf q}(t)= (q(t), \dot{q}(t), \ddot{q})(t)$ be the configuration of the system at time $t$ then we select the upper bound on 
$$
|\mu_i(e_i|{\bf{q}}(t), \mathcal{D})- e_i({\bf q}(t))|
$$
to be
\begin{align*}
\rho_i(t)= &\max \left\{|\mu_i(e_i|{\bf{q}}(t), \mathcal{D}) - \beta_i\,\Sigma\left(e_i|{\bf q}, \mathcal{D}\right)|, \right.\\
&\qquad\qquad \qquad \left. |\mu_i(e_i|{\bf{q}}(t), \mathcal{D})+\beta_i\,\Sigma\left(e_i|{\bf q}, \mathcal{D}\right)|\right\}.
\end{align*}
Then a good estimate for the upper bound on $\| e({\bf q}(t)) - \hat{e}({\bf q}(t))\|$ is given by
\begin{equation}\label{eq:rho}
    \rho(t)= \sqrt{\sum_{i=1}^N \rho_i^2(t)}.
\end{equation}

\subsection{Convergence analysis}
We start our analysis with the following assumption.
\begin{assumption}\label{ass:2}
   The desired trajectory to be tracked and the trajectory generated by the robot belong to a compact set.
\end{assumption}
\vspace{0.2cm}
Based on the above assumption, we are now ready to state the following main result. 
\begin{proposition}
    Consider the Lagrangian system in \eqref{eq:Dynamics} and assume Assumptions \ref{ass:1} and \ref{ass:2} hold true.  Then, the proposed,
robust, learning-based control strategy in \eqref{eq:Fed_Lin_Scheme_robust_prob}
with the uncertainty upper bound $\rho$ calculated by \eqref{eq:rho} ensures with high
probability of at least $(1 - \delta_p)^N$
that the tracking error $\tilde{q}(t)$ goes asymptotically to $0$.
\end{proposition}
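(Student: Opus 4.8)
The plan is to obtain the statement as a corollary of the first Proposition, which already guarantees $\tilde q(t)\to0$ whenever the robust gain satisfies $\rho(t)>\|e({\bf q}(t))-\hat e({\bf q}(t))\|$ along the trajectory. Thus it is enough to show that the design rule \eqref{eq:rho} makes this inequality hold simultaneously for all $t$, with probability at least $(1-\delta_p)^N$.

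First, fix once and for all a compact set $\mathcal{Q}$ that contains every configuration ${\bf q}(t)=(q(t),\dot q(t),\ddot q(t))$ visited in closed loop; this is exactly what Assumption \ref{ass:2} provides, noting that $\ddot q(t)$ remains bounded because, by \eqref{eq:Fed_Lin_Scheme_robust_prob}, it is a continuous function of the bounded quantities $q,\dot q,\tilde q,\dot{\tilde q}$, the reference, $\hat M^{-1}$ and $\hat e$. By Assumption \ref{ass:1}, each scalar component $e_i$ has finite RKHS norm and the noise is uniformly bounded, so Lemma \ref{Lem:1} applies on $\mathcal{Q}$ with $\delta=\delta_p$: for every $i\in\{1,\dots,N\}$ the event
$$
\mathcal{E}_i=\Big\{\,|\hat e_i({\bf q})-e_i({\bf q})|\le\sqrt{\beta_i\,\Sigma(e_i|{\bf q},\mathcal{D})}\ \ \text{for all }{\bf q}\in\mathcal{Q}\,\Big\}
$$
has probability at least $1-\delta_p$, where I used that the predictor is $\hat e_i({\bf q})=\mu(e_i|{\bf q},\mathcal{D})$.

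Second, observe that the scalar design quantity $\rho_i(t)$ appearing in the design rule is built from the confidence half-width of Lemma \ref{Lem:1} so that $\rho_i(t)\ge\sqrt{\beta_i\,\Sigma(e_i|{\bf q}(t),\mathcal{D})}$ (indeed $\max\{|a-b|,|a+b|\}=|a|+|b|$, so $\rho_i(t)$ equals $|\mu(e_i|{\bf q}(t),\mathcal{D})|$ plus that half-width). Hence on $\mathcal{E}_i$ one has $\rho_i(t)\ge|e_i({\bf q}(t))-\hat e_i({\bf q}(t))|$ for all $t$. Squaring, summing over $i=1,\dots,N$ and taking square roots gives, on $\bigcap_{i=1}^N\mathcal{E}_i$,
$$
\rho(t)=\sqrt{\textstyle\sum_{i=1}^N\rho_i^2(t)}\ \ge\ \sqrt{\textstyle\sum_{i=1}^N|e_i({\bf q}(t))-\hat e_i({\bf q}(t))|^2}\ =\ \|e({\bf q}(t))-\hat e({\bf q}(t))\|
$$
for every $t$, which is condition \eqref{eq:Bound_rho}. (The strict inequality needed by the first Proposition is recovered by inflating each $\rho_i$ by an arbitrarily small constant, or simply by noting that the extra $|\mu(e_i|{\bf q}(t),\mathcal{D})|$ term is strictly positive whenever $z\ne0$.) Finally, since the $N$ scalar GP models are built as conditionally independent regressions (Section \ref{sec:back_GPR}), the events $\mathcal{E}_1,\dots,\mathcal{E}_N$ are independent, so $\mathrm{Pr}\big(\bigcap_{i=1}^N\mathcal{E}_i\big)\ge(1-\delta_p)^N$; on this event the hypothesis of the first Proposition holds uniformly in $t$, and its conclusion $\tilde q(t)\to0$ follows.

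The step I expect to be the main obstacle is the first one, namely making the choice of $\mathcal{Q}$ rigorous: it hides a circularity, because the closed-loop trajectory stays in a compact set only if the controller already performs as intended, which is precisely what we are proving. The paper breaks the loop by postulating Assumption \ref{ass:2}; in a careful write-up I would stress that $\mathcal{Q}$ must be chosen \emph{before} Lemma \ref{Lem:1} is invoked, so that $\beta_i$ and the maximum information gain $\gamma$ are fixed finite constants, and I would also flag the standard well-posedness caveat that $w=\rho z/\|z\|$ is discontinuous at $z=0$, so closed-loop solutions are understood in the Filippov sense exactly as in the first Proposition. A secondary, minor point is the independence used to pass from $1-\delta_p$ to $(1-\delta_p)^N$: this is nothing more than the conditional-independence modeling choice for vector-valued GPR recalled in Section \ref{sec:back_GPR}, not an additional assumption.
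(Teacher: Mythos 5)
Your proposal is correct and follows essentially the same route as the paper: invoke Lemma \ref{Lem:1} on the compact set guaranteed by Assumption \ref{ass:2}, use the conditional independence of the $N$ scalar GPs to pass from $1-\delta_p$ to $(1-\delta_p)^N$, and conclude via the first Proposition. In fact your write-up is more careful than the paper's, which never explicitly verifies that $\rho_i(t)$ dominates the confidence half-width or addresses the strict-inequality and circularity points you flag.
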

\begin{proof}
We start by recalling that by Assumption \ref{ass:2} we have that ${\bf q}(t)$ lies in a compact set for any $t$. Now our problem reduces to computing the probability that $\rho(t)$ in \eqref{eq:rho} is a correct upper bound on $\| e({\bf q}(t)) - \hat{e}({\bf q}(t))\|$ for all $t$.

Consider the
confidence region proposed in Lemma \ref{Lem:1} for calculating
the upper bound on the absolute value of each component 
$|\mu(e_i|{\bf q}, \mathcal{D})-e_i({\bf q})|$, $i=1,\ldots,N$.
Then, under Assumption \ref{ass:2}, the probability that this upper
bound is correct for any $t$ is higher than $(1-\delta)$ from
Lemma \ref{Lem:1}. Since the $N$ Gaussian processes modeling $e_i$, $i=1,\ldots,N$, are assumed to be independent and the
noises added to the output observations are uncorrelated, then the
probability that the upper bounds on the absolute values of
all $|\mu(e_i|{\bf q}, \mathcal{D})-e_i({\bf q})|$, $i=1,\ldots,N$, and, in turn, the upper bound on $\| e({\bf q}(t)) - \hat{e}({\bf q}(t))\|$,
are correct is higher than $(1 - \delta)^N$.
This concludes the proof.
\end{proof}
\begin{remark}
   Though assumption \ref{ass:2} is realistic in practical scenarios, from a theoretical point of view it requires two strong facts to be true. In our future research we envision to relax Assumption \ref{ass:2}, by assuming for instance that only the desired trajectory lies in a compact and proving the proposed control strategy will not cause the actual trajectory to blow up. 
\end{remark}

\section{Simulation results}\label{sec:NumericalResults}
\begin{figure*}[ht!]
    \centering
    \includegraphics[width=.98\textwidth]{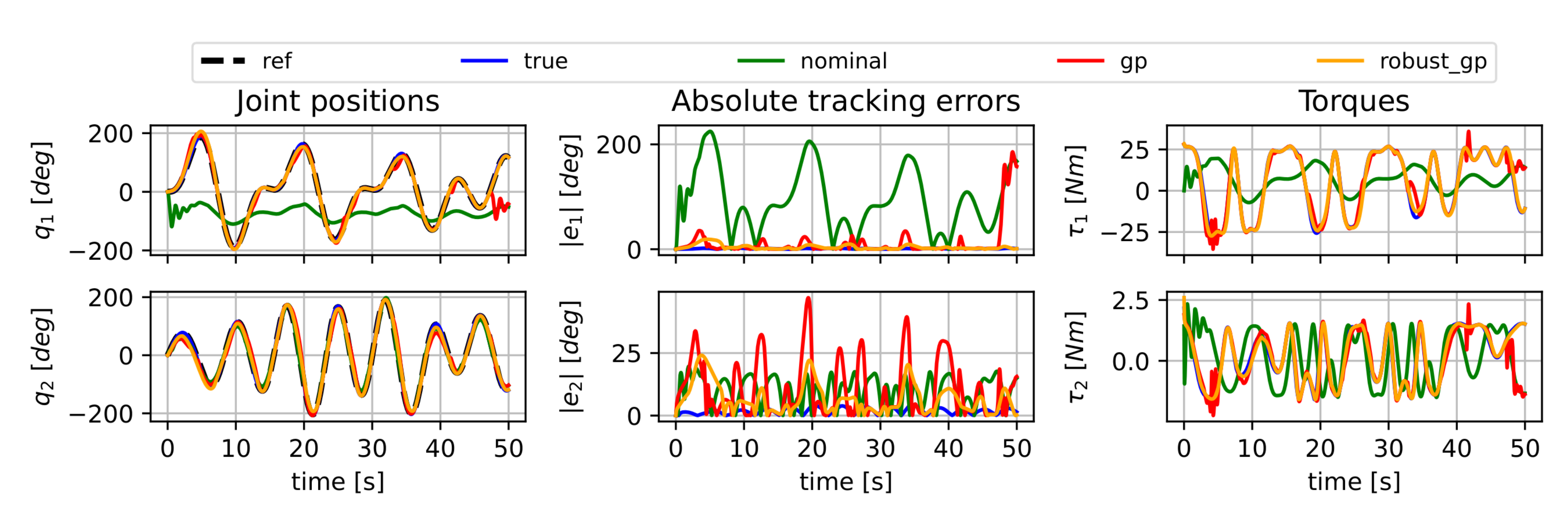}
    \caption{Trajectories obtained on one seed of the tracking experiments. The first column reports joint positions, the second column reports the absolute value of the tracking error, while the third column reports the actuation torques.}
    \label{fig:tracking_performance}
\end{figure*}

In this Section, we test the proposed method on a simulated setup involving a 2 DOF manipulator with 2 revolute joints. The robot's dynamics have been simulated in Python using the Scipy library \cite{2020SciPy-NMeth}. 


To verify our approach, we designed a trajectory-tracking experiment where the reference trajectory of each joint is the sum of random sinusoids, namely
\begin{equation}\label{eq:ref_trj}
    q^d_i(t) = \frac{2 \, \pi}{N_s} \sum_{i=1}^{N_s} \sin(\omega_i t),
\end{equation}
where $q^d_i$ denotes the desired position for the $i$-th joint, $N_s = 5$ is the number of sinusoids, while the angular frequency $\omega_i$ is randomly sampled from a uniform distribution, in such a way that $\omega_i \sim U([\omega_{min}, \omega_{max}])$, with $\omega_{min} = 0.1\pi $ rad/s and $\omega_{max}= 0.3\pi $ rad/s. 
All the reference trajectories last for $50$ seconds, and all the control loops run at a frequency of $100$ Hz.

To obtain statistically relevant results, we performed the trajectory tracking experiment on 10 different trajectories, obtained as in \eqref{eq:ref_trj} by varying the random seed. We learned the error in \eqref{eq:e} using GPR with the SE kernel \eqref{eq:K-SE}, the hyperparameters of which are obtained by marginal likelihood maximization \cite{rasmussen2003gaussian}. The training dataset has been generated by evaluating the true inverse dynamics model in \eqref{eq:Dynamics} on a trajectory of the same type of \eqref{eq:ref_trj}, with a seed different from those considered in the tracking experiment. To reduce the computational burden, the training data have been downsampled with a constant rate of $50$, which resulted in a dataset of $100$ samples. 

Regarding the implementation of the robust control law in \eqref{eq:Fed_Lin_Scheme_robust_prob}, the value of $\rho$ in \eqref{eq:rho} is obtained with $\beta = 3$, which is a common choice in literature \cite{helwa2019provably}. To stress modeling uncertainty, the nominal model has been chosen as $\hat{M} = 0.5\,I$, $\hat{n} =0$.
Finally, we used $\epsilon = 0.5$, which resulted in almost no chattering.

We compare our method against three baseline controllers. The first controller implements the feedback linearization law in \eqref{eq:Fed_Lin_Scheme} using the true model, which is included as a reference. The second controller exploits only the nominal model as in \eqref{eq:Fed_Lin_Scheme_uncertaint} with no robust term, while the third baseline consists of the nominal model plus error compensation, namely \eqref{eq:Fed_Lin_Scheme_robust_prob} with $w = 0$. 
For all the controllers, the proportional and derivative gains are diagonal matrices defined as  $K_P = k_p\, I$ and
$K_D = k_d\, I$, with $k_p = 50$ and $k_d = 2\sqrt{k_p}$, respectively. 


In Table~\ref{tab:MC_results} we report the trajectory tracking results obtained with the considered controllers. The performances are evaluated in terms of root mean squared error (RMSE) averaged over the 2 joints. The table reports the mean and the standard deviation of the average RMSE obtained over the 10 seeds. It can be seen that our approach (robust gp) strongly outperforms the nominal and gp baselines, which confirms the capability of the method to compensate for model inaccuracies and uncertainties.   
\begin{table}[h!]
    \centering
    \caption{RMSE distribution obtained on 10 trajectories, reported as mean $\pm$ standard deviation in degrees.}
    \begin{tabular}{ c c c c }
        \hline \\[-1.5ex]
        true & nominal & gp & robust gp\\ [0.5ex]
        \hline \\ [-1.5ex]
        $2.28 \pm 0.2$ & $116.80 \pm 6.43$ & $55.56 \pm 23.15$ & $15.67 \pm 3.89$\\
        \hline \\
    \end{tabular}
    \label{tab:MC_results}
\end{table}

To better understand the benefits of the proposed method, Fig.~\ref{fig:tracking_performance}
reports the joint positions, tracking errors, and actuation torques obtained on one of the reference trajectories. 
First, note that the nominal model, which is very inaccurate, is not able to track the reference trajectory, with a tracking error on the first joint that reaches the $100 \%$ of the desired trajectory. The addition of the error compensation strongly improves the performance. However, the non-robust GP model fails to track the reference when the state of the system reaches values far from those contained in the training dataset. As can be seen from the last few seconds of the reported trajectory, when this happens the GP outcome coincides with the prior mean, which is zero, and the controller action is the same as the nominal model's one. 
The addition of the robust term proposed in this work guarantees better tracking, with a control action that strongly resembles the true model's one over the entire simulation horizon. 
Finally, we note that the residual error obtained with the true model is due to the approximations induced by the discrete-time implementation.

\section{Conclusions}\label{sec:conclusions}

In this paper, we presented a novel GP-based robust control scheme that assures highly accurate trajectory tracking for the class of Lagrangian systems. The idea is to learn the modeling error with GPR and exploit the uncertainty provided to design an additional robust term, which, under mild assumptions, guarantees asymptotic error rejection. 
We validated the proposed approach on a simulated 2 DOF system. Results show that the developed method can compensate for model uncertainties and inaccuracies, with a control action comparable to the uncertainty-free model.
In future works, we plan to extend the evaluation to more complex and real manipulators.





\bibliographystyle{IEEEtran}
\bibliography{ref}

\end{document}